\documentclass[11pt]{article}
\usepackage[margin=1in]{geometry}
\usepackage{amsmath,amssymb,amsthm}
\usepackage{graphicx}
\graphicspath{{./}}
\usepackage{booktabs}
\usepackage{natbib}
\usepackage{hyperref}
\usepackage{placeins}
\usepackage{xcolor}
\usepackage{mathptmx}

\newtheorem{proposition}{Proposition}

\title{When Actions Disappear:\\Adversarial Action Removal in Self-Play Reinforcement Learning}

\date{}
\author{
  Arahan Kujur\\
  Independent Researcher\\
  \texttt{kujurarahan@gmail.com}
}

\begin{document}
\maketitle

\begin{abstract}
We study adversarial action masking in self-play reinforcement learning: an attacker selectively removes legal actions from a victim's action set. Unlike observation or action perturbations, removal eliminates decision options before the agent acts. Across poker games scaling from 6 to 5,531 information states and two non-poker domains, learned masking causes substantially more damage than random masking and learned perturbation baselines. The attack persists across Q-learning, PPO, NFSP, neural NFSP, and DQN victims; transfers across agents; is amplified by self-play; and shows no recovery under extended masked training. Mechanistically, the adversary targets high-value decision points, captured by reach-weighted contingent action capacity (CAC$_w$) and a value-weighted refinement CAC$_v$. These results identify action availability as a distinct robustness surface in self-play RL.
\end{abstract}

\section{Introduction}

Multi-agent reinforcement learning (MARL) agents trained via self-play achieve strong performance in competitive domains~\citep{silver2018general,brown2019superhuman}, but their robustness to structural environment changes remains poorly understood. Prior work on adversarial attacks focuses on observation perturbations~\citep{huang2017adversarial,gleave2020adversarial} or reward manipulation~\citep{zhang2020adaptive}. We study a different, more severe attack surface: the \emph{action space} itself.

An adversary that selectively removes actions---disabling specific capabilities rather than adding noise---poses a qualitatively different threat from bounded perturbation. Such attacks arise naturally: hardware failures disable actuators, regulatory changes restrict strategies, API deprecations remove endpoints, and sandboxing constrains agent capabilities. We formalise this as a bi-level optimisation where the inner loop trains an RL agent under masked actions and the outer loop trains an adversary to choose which actions to remove.

Our key finding is that adversarial masking is \emph{dramatically more efficient} than random removal: across Leduc variants, learned removal is up to $4.8\times$ more damaging than random masking at comparable support. The mechanism operates through selective minimisation of reach-weighted contingent action capacity (CAC$_w$)---empirically verified by strong correlation between CAC$_w$ and victim reward across budget levels.

\paragraph{Contributions.}
\begin{itemize}
  \item We formalise adversarial action masking and show it is $4\times$ more damaging than learned perturbation (RARL-style) with equal training budget.
  \item We demonstrate scaling from 6 to \textbf{5,531} information states across five algorithms (QL, PPO, NFSP, neural NFSP, DQN), with the adversary's advantage increasing from $2.2\times$ to $4.8\times$ within the Leduc family as game complexity grows.
  \item We validate cross-domain in two non-poker environments (competitive gridworld, resource collection), confirming the phenomenon is not poker-specific.
  \item We connect the mechanism to CAC$_w$ ($r = 0.80$) and a refined CAC$_v$ ($r = 0.81$), and show victims do not recover under extended training.
\end{itemize}

\section{Related Work}

\paragraph{Adversarial attacks on RL.}
Adversarial attacks on RL most commonly perturb observations~\citep{huang2017adversarial,gleave2020adversarial,zhang2021robust,sun2022who} or poison rewards~\citep{zhang2020adaptive}. In multi-agent settings, recent work studies adversarial policies~\citep{gleave2020adversarial}, sparse or model-based attacks against cooperative MARL~\citep{lin2020marlrobust,hu2022sparse}, and collusive policy-level attackers~\citep{niu2026cama}. These attacks manipulate observations, policies, or rewards while leaving the legal action set intact. We instead study a structural attack on action availability itself.

\paragraph{Robust and action-robust RL.}
Robust MDPs~\citep{iyengar2005robust,nilim2005robust} and constrained MDPs~\citep{altman1999constrained} model uncertainty or constraints in the environment. Robust Adversarial RL (RARL)~\citep{pinto2017robust} and action-robust RL~\citep{tessler2019action} train against perturbations of the action actually executed. Those channels modify selected actions; our adversary removes actions from the legal set before selection. This difference matters: bounded perturbations preserve the agent's ability to choose, whereas removal can collapse a decision point to a singleton action set.

\paragraph{Robust MARL defenses.}
Recent robust MARL methods train agents against structured adversaries, including game-theoretic robust training for temporally coupled perturbations (GRAD)~\citep{liang2024grad} and fault-switching MARL defenses such as MARTA~\citep{mguni2025marta}. Fault-tolerant control has long studied actuator failures and reconfiguration~\citep{blanke2006fault}; our setting can be viewed as a learning-based analogue where failures are state-dependent and adversarially selected. These approaches are complementary: they improve robustness to perturbations, malfunctions, or adversarial agents within an intact joint action space. Our attack changes the feasible action set itself. A mask-aware version of such defenses is a natural direction, but existing guarantees do not directly cover state-dependent removal of legal actions.

\paragraph{Action sets and masking.}
Invalid action masking prevents agents from selecting impossible moves~\citep{huang2022closer}. Decision-theoretic planning has long studied structural leverage and action contingencies~\citep{boutilier1999decision}, and empowerment measures quantify an agent's available control over future outcomes~\citep{klyubin2005empowerment}. Our CAC view is related in spirit but is game-local and adversarial: it measures remaining multi-action decision capacity at reached information states. Our work reverses the usual masking motivation: the mask is not a safety or efficiency aid, but an attacker-chosen capability removal.

\paragraph{Self-play and imperfect-information games.}
Self-play can overfit, cycle, or exploit non-transitive structure~\citep{balduzzi2019open,lanctot2019openspiel}. Regret-minimization methods such as CFR~\citep{zinkevich2007regret} and Deep CFR~\citep{brown2019deep} provide strong baselines for imperfect-information games, while NFSP~\citep{heinrich2016deep} combines best-response learning with average-strategy tracking. Population methods such as PSRO~\citep{lanctot2017unified} maintain diversity. We show that these algorithmic stabilizers do not address a different failure mode: when the action set itself is structurally reduced, averaging and population diversity cannot restore eliminated strategic dimensions.

\section{Problem Formulation}

\paragraph{Game model.}
Consider a two-player zero-sum extensive-form game $\Gamma$ with information sets $\mathcal{I} = \mathcal{I}_0 \cup \mathcal{I}_1$. At information set $h \in \mathcal{I}_p$, player $p$ chooses from legal actions $A(h)$. Player~0 is the \emph{victim}; Player~1 is the opponent. Both learn via self-play.

\paragraph{Adversary definition.}
An \emph{action-removal adversary} is a mapping $\mathcal{M}: \mathcal{I}_0 \to 2^A$ that, at each victim information set $h$, selects a subset $\mathcal{M}(h) \subseteq A(h)$ of actions to \emph{retain}. The victim observes only $\mathcal{M}(h)$ and is unaware of the adversary. Formally:

\begin{itemize}
\item \textbf{Input}: information set $h$, legal actions $A(h)$, current player $p$.
\item \textbf{Output}: $\mathcal{M}(h) \subseteq A(h)$ with $|\mathcal{M}(h)| \geq 1$ (at least one action retained).
\item \textbf{Constraint}: $|\text{supp}(\mathcal{M})| = |\{h : |\mathcal{M}(h)| < |A(h)|\}| \leq k$ (budget).
\item \textbf{Objective}: minimise the victim's expected value $V_0(\pi^*_\mathcal{M})$.
\end{itemize}

\paragraph{Bi-level optimisation.}
The adversary and victim interact through a bi-level problem:
\begin{align}
\text{Inner:} \quad & \pi^*_\mathcal{M} = \arg\max_\pi \mathbb{E}\left[\sum_t r_t \mid \pi, \mathcal{M}\right] \label{eq:inner} \\
\text{Outer:} \quad & \mathcal{M}^* = \arg\min_{\mathcal{M} \in \mathcal{C}_k} V_0(\pi^*_\mathcal{M}) \label{eq:outer}
\end{align}
where $\mathcal{C}_k = \{\mathcal{M} : |\text{supp}(\mathcal{M})| \leq k\}$. The inner loop trains the victim under the adversary's mask via RL; the outer loop updates the adversary via REINFORCE with reward signal $-V_0$. In practice, we alternate: 500 episodes of inner training, then one adversary gradient step, for 20--25 outer iterations.

\paragraph{Adversary implementation.}
\emph{Tabular}: preference table $\theta(h, a)$ with $p_{\text{remove}}(a|h) = \text{softmax}(\theta(h,\cdot))$; the action with highest removal probability is removed. Updated via REINFORCE.
\emph{Neural}: MLP $f_\phi: \mathbb{R}^d \to \Delta^{|A|+1}$ mapping state features to a distribution over $|A|$ removal choices plus ``no removal.'' Trained via REINFORCE with mean-reward baseline.

\paragraph{Connection to CAC$_w$.}
Define \emph{reach-weighted contingent action capacity}:
\[
\text{CAC}_w(\mathcal{M}) = \sum_{h \in \mathcal{I}_0} \rho(h) \cdot \mathbf{1}[|\mathcal{M}(h)| > 1]
\]
where $\rho(h)$ is the reach probability of $h$ under current play. CAC$_w$ measures \emph{remaining} multi-action capacity: an unmasked decision state with at least two retained actions contributes positive capacity, while a state collapsed to a singleton contributes zero. Stronger attacks therefore drive CAC$_w$ downward by turning high-reach decision points into forced moves. The adversary's value-minimisation objective~\eqref{eq:outer} can be decomposed: masking state $h$ reduces $V_0$ by approximately $\rho(h) \cdot \delta(h)$, where $\delta(h) = |Q(h, a^*) - Q(h, a_{\text{forced}})|$ is the \emph{value gap} at $h$. This motivates a refined measure:
\[
\text{CAC}_v(\mathcal{M}) = \sum_{h \in \mathcal{I}_0} \rho(h) \cdot \delta(h) \cdot \mathbf{1}[|\mathcal{M}(h)| > 1]
\]
Empirically, CAC$_v$ correlates with victim reward at $r = 0.81$ vs.\ CAC$_w$'s $r = 0.77$ (Section~\ref{sec:efficiency}).

\paragraph{Deterministic Exploitation Attractor (DEA).}

\begin{proposition}[DEA Convergence]
Under CAC$_w = 0$ in self-play Q-learning with $\varepsilon$-greedy ($\varepsilon > 0$, $\alpha \in (0,1)$):
(i)~the victim's policy converges to the unique forced action at every information set;
(ii)~the opponent's Q-values converge to $Q^* = V^{\text{BR}(\sigma^*)}$;
(iii)~$(\sigma^*, \text{BR}(\sigma^*))$ is a stable fixed point.
\end{proposition}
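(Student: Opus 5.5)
The plan is to observe that CAC$_w = 0$ turns the two-player learning problem into a single-agent MDP for the opponent. Once that reduction is in place, (i)--(iii) follow from standard facts about tabular Q-learning. The first step is to unpack the hypothesis. If $\rho(h)$ is taken under $\varepsilon$-greedy play with $\varepsilon>0$, every information set reachable under some legal play has $\rho(h)>0$. So $\text{CAC}_w=0$ forces $|\mathcal{M}(h)|=1$ at every reachable victim information set; at unreachable ones the victim's choice is irrelevant. Write $\mathcal{M}(h)=\{a_h\}$.

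For part~(i), the victim's action selection, whether greedy or exploratory, is restricted to $\mathcal{M}(h)$. With $\varepsilon$-greedy over the retained set, both branches return $a_h$ with probability one. The victim's behaviour policy is therefore the deterministic policy $\sigma^*(h)=a_h$ from the first episode onward, whatever its Q-values do. Convergence in (i) is thus trivial, since the policy is constant. I would remark that the victim's own Q-table converges to the on-policy value of $a_h$ against the opponent. That fact is not needed for the statement.

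For part~(ii), fix $\sigma^*$. From the opponent's perspective, chance together with $\sigma^*$ is a stationary environment. The opponent's information sets become the states of a finite episodic MDP. There is a caveat: the opponent observes information sets rather than full histories. However, since $\sigma^*$ and chance are fixed, the distribution over histories inside an opponent information set depends only on the opponent's own past actions, which are recorded in the information set under perfect recall. The induced process is therefore Markov in the opponent's information sets. Under $\varepsilon$-greedy with $\varepsilon>0$, every opponent (state, action) pair with positive reach is visited infinitely often. With step sizes satisfying Robbins--Monro, the standard Watkins--Dayan / Jaakkola--Jordan--Singh theorem gives $Q\to Q^*$, the optimal action-values of this MDP. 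These are exactly the best-response values $V^{\text{BR}(\sigma^*)}$.

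Part~(ii) contains the main obstacle: the statement only says $\alpha\in(0,1)$. With a constant step size and stochastic chance outcomes, Q-learning does not converge almost surely. It only converges in distribution to a neighbourhood of $Q^*$ of size $O(\alpha)$. I would first try to handle this by interpreting $\alpha$ as a schedule $\alpha_t\in(0,1)$ with $\sum\alpha_t=\infty$ and $\sum\alpha_t^2<\infty$. Failing that, I would state convergence in expectation or in a ball of radius $O(\sqrt{\alpha})$ via a stochastic-approximation ODE argument. The exception is deterministic-transition subgames, where a constant $\alpha$ suffices.

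For part~(iii), stability is argued directly. The victim's policy cannot move at all under any perturbation of Q-values, because its action set is a singleton. The opponent's greedy policy is $\text{BR}(\sigma^*)$ once $Q$ is within half the minimal nonzero action gap of $Q^*$. The ODE for the Q-learning update is a contraction (the Bellman operator is a sup-norm contraction in the episodic or discounted case). So a small perturbation of the joint state $(Q_0,Q_1)$ returns to a neighbourhood of the fixed point, and the greedy policies remain $(\sigma^*,\text{BR}(\sigma^*))$. This gives Lyapunov-type stability, with the sup-norm distance to $Q^*$ as the Lyapunov function. Ties in $\text{BR}(\sigma^*)$ only weaken this to stability of the value, not of the selected action.
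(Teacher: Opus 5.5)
Your proposal is correct and follows the same route as the paper's sketch: singleton masks make the victim's behaviour independent of its Q-values, so the opponent faces a stationary MDP on which standard Q-learning convergence applies, and the pair is stable because the victim has no alternative while the opponent best-responds. You are more careful than the paper in two ways. You justify the Markov property via perfect recall, and you correctly note that the paper's appeal to ``standard conditions'' tacitly requires Robbins--Monro step sizes, since with a literal constant $\alpha\in(0,1)$ and random card deals the opponent's Q-values only fluctuate around $Q^*$ rather than converge (your Lyapunov reading of (iii) likewise strengthens the paper's ``neither player can unilaterally deviate'' into a statement about the learning dynamics).
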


\emph{Proof sketch.} Forced actions make the victim's policy Q-value-independent. The opponent faces a stationary MDP; Q-learning converges under standard conditions. The pair is stable: neither player can unilaterally deviate.

\begin{proposition}[Damage bound]
Let $\mathcal{M}$ mask a single information set $h^*$ by retaining a singleton action $a_f(h^*)$. The victim's value loss relative to unmasked play is bounded by:
\[
\Delta V_0 \geq \rho(h^*) \cdot [Q_0(h^*, a^*(h^*)) - Q_0(h^*, a_f(h^*))]
\]
where $a^*(h)=\arg\max_{a\in A(h)} Q_0(h,a)$ is the victim's best action, $a_f(h)$ is the action forced by the mask at $h$, and $\rho(h)$ is the reach probability. For a budget-$k$ adversary with support $S=\text{supp}(\mathcal{M})$, the additive approximation is:
\[
\Delta V_0(\mathcal{M}) \approx
\sum_{h \in S} \rho(h) \cdot [Q_0(h,a^*(h)) - Q_0(h,a_f(h))]
\]
and is upper-bounded by:
\[
\Delta V_0(\mathcal{M}) \leq
\sum_{h \in S} \rho(h) \cdot \max_{a_f \in A(h)\setminus\{a^*(h)\}}
[Q_0(h,a^*(h)) - Q_0(h,a_f)].
\]
Thus a greedy adversary selects states with high $\rho(h)\delta(h)$, where $\delta(h)=Q_0(h,a^*(h))-Q_0(h,a_f(h))$---exactly the states that maximise per-unit CAC$_v$ reduction.
\end{proposition}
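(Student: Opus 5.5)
The plan is to work with a performance-difference (telescoping) decomposition of the victim's value. I would fix the opponent's policy $\pi_1$, so that from the victim's perspective the game is a single-agent decision problem over $\mathcal{I}_0$. I would interpret $Q_0(h,a)$ as the victim's action value at $h$ under the unmasked victim policy $\pi_0$, and assume $\pi_0$ plays $a^*(h)$ at $h$; this is the greedy limit of $\varepsilon$-greedy Q-learning. The masked victim policy $\pi_0^{\mathcal{M}}$ agrees with $\pi_0$ everywhere except at masked information sets, where it plays $a_f(h)$. The performance-difference lemma then gives
\[
V_0(\pi_0) - V_0(\pi_0^{\mathcal{M}}) \;=\; \sum_{h\in\mathcal{I}_0} \rho^{\mathcal{M}}(h)\,\bigl[Q_0(h,\pi_0(h)) - Q_0(h,\pi_0^{\mathcal{M}}(h))\bigr],
\]
where $\rho^{\mathcal{M}}$ is the reach probability under the masked play. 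Every term with $h\notin S$ vanishes, because the two policies coincide there.

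For the single-set case, $S=\{h^*\}$, the sum has exactly one term, $\rho^{\mathcal{M}}(h^*)\,[Q_0(h^*,a^*)-Q_0(h^*,a_f)]$. The mask changes play only at $h^*$ itself, and nothing upstream of $h^*$ is altered, so $\rho^{\mathcal{M}}(h^*)=\rho(h^*)$. In this idealised model the displayed bound therefore holds with equality, which in particular gives the stated $\geq$.

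To make the ``$\ge$'' substantive, I would allow the victim to re-optimise at information sets downstream of $h^*$ after the mask is applied. Any such adaptation can only raise the masked value, which works against the inequality. So I would state explicitly that the bound concerns the non-adapting (frozen) victim, or equivalently that $Q_0$ is evaluated under the victim's continuation policy. Pinning down this convention is the first place I expect care to be needed.

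For a budget-$k$ mask I would apply the same identity with $S$ as the support. The only discrepancy from the additive formula is that $\rho^{\mathcal{M}}(h)$ may differ from $\rho(h)$ when some masked set lies upstream of another. Whenever the sets in $S$ are not ancestors of one another in the victim's information-set tree, the reaches coincide and the additive formula is exact. Otherwise the error is of second order: it is a product of reach changes and value gaps, and I would bound it by $\sum_{h\ne h'\in S}$ of such cross terms.

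For the upper bound I would use the fact that $a_f(h)\in A(h)\setminus\{a^*(h)\}$, or $a_f(h)=a^*(h)$ with a zero gap. This gives $\delta(h)\le \max_{a_f}[Q_0(h,a^*)-Q_0(h,a_f)]$ term by term. I would also need $\rho^{\mathcal{M}}(h)\le\rho(h)$, or else absorb the discrepancy into the second-order term. The greedy remark then follows by noting that removing $h$ from the CAC$_v$ sum lowers CAC$_v$ by exactly $\rho(h)\delta(h)$, which is the same quantity as that set's contribution to $\Delta V_0$.

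The main obstacle is justifying the reach-probability comparison $\rho^{\mathcal{M}}\le\rho$ for nested masked sets, and hence the upper bound. This comparison can actually fail, since a forced upstream action may increase the reach of a downstream masked set. If it does, I would restate the upper bound with $\rho^{\mathcal{M}}$ in place of $\rho$, or restrict to supports in which no masked set is an ancestor of another. In either case I would flag the approximation as first-order in the number of interacting masked sets.
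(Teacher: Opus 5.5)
Your proposal is correct and is essentially the paper's own argument made rigorous. Your performance-difference identity is the precise form of its ``definition of Q-values and linearity of expectation over the reach-weighted trajectory,'' and your non-ancestor condition on $S$ is its ``independent branches'' assumption. The caveats you flag are genuine: the single-set bound is really an equality for a frozen greedy victim, and the upper bound needs $\rho^{\mathcal{M}}(h)\le\rho(h)$, which nested masks can violate. The paper's proof tacitly relies on the same assumptions and acknowledges the reach-interaction issue only informally, in its footnote.
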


\emph{Proof.} The bound follows from the definition of Q-values and linearity of expectation over the reach-weighted trajectory. At state $h^*$, forcing $a_f(h^*)$ instead of $a^*(h^*)$ loses $Q_0(h^*,a^*(h^*)) - Q_0(h^*,a_f(h^*))$ in expectation, weighted by $\rho(h^*)$. Under the simplifying assumption that masked states lie on independent branches, the resulting coverage objective suggests an approximate greedy structure.\footnote{This independence assumption is only approximate in extensive-form games: masking one state can alter reach probabilities and values downstream.}

\paragraph{Theoretical scope.}
Propositions 1--2 provide \emph{sufficient conditions} and \emph{local bounds}, not a complete characterization of the optimal adversary. The damage bound is per-state and additive, ignoring cross-state interactions (masking state $h$ may shift reach probabilities and Q-values at downstream states). A global characterization would require solving a combinatorial optimisation over $\binom{|\mathcal{I}_0|}{k}$ mask configurations---NP-hard in general via reduction to weighted max coverage. Under approximate independence, the value-gap decomposition motivates a greedy heuristic over high $\rho \cdot \delta$ states; we do not claim a general approximation guarantee for arbitrary extensive-form games. We view these bounds as explaining \emph{why} the adversary works rather than fully characterizing \emph{how well} it can work.

\section{Methods}

\paragraph{Self-play protocol.}
Both players share a single learning agent; the mask is applied to Player~0's actions at every decision point during \emph{both training and evaluation}. In the self-play regime, both sides adapt continuously. In the fixed-opponent regime, a snapshot of the agent's value function is frozen before masking begins; this static copy plays as Player~1 while only Player~0 continues learning under the mask.

\paragraph{Victim agents.}
We evaluate four algorithms spanning tabular and neural methods:
\begin{itemize}
\item \textbf{Tabular Q-Learning}: $\varepsilon$-greedy, $\varepsilon = 0.15$, $\alpha = 0.1$.
\item \textbf{Tabular PPO}: softmax policy, clipped surrogate, entropy bonus (Appendix~\ref{app:ppo}).
\item \textbf{Tabular NFSP}~\citep{heinrich2016deep}: best-response QL + average strategy, $\eta = 0.1$.
\item \textbf{DQN}: 2-layer MLP (64 hidden), experience replay (20k buffer), target network, Adam ($10^{-3}$). Full hyperparameters in Appendix~\ref{app:hyperparams}.
\end{itemize}

\paragraph{Masking strategies.}
\begin{table}[ht]
\centering
\caption{Masking strategies evaluated.}
\label{tab:strategies}
\begin{tabular}{ll}
\toprule
Strategy & Description \\
\midrule
None & Full action space (baseline) \\
Random($p$) & Remove each action independently with prob.\ $p$ \\
Fixed & Always remove a specific action (e.g., RAISE) \\
Adversarial (tabular) & Learned per-state removal via softmax preference table \\
Adversarial (neural) & MLP adversary trained with REINFORCE \\
Value heuristic & Remove at states with highest $|Q|$ \\
\bottomrule
\end{tabular}
\end{table}

\paragraph{Adversary architectures.}
Both tabular and neural adversaries are defined formally in Section~3. The neural adversary scales its architecture with game size: 2-layer (32 hidden) for Kuhn/Leduc, 3-layer (128-64) for Leduc-10/20.

\paragraph{Environments.}
\textbf{Kuhn Poker}: 3 cards, 2 actions, 6 P0 info states.
\textbf{Leduc Poker}: 6 cards (3 ranks $\times$ 2 suits), 3 actions, 2 rounds, $\sim$50 P0 info states.
\textbf{Leduc-5}: 10 cards (5 ranks), 389 P0 info states.
\textbf{Leduc-10}: 20 cards (10 ranks), 1,496 P0 info states.
\textbf{Leduc-20}: 40 cards (20 ranks), \textbf{5,531 P0 info states}.
\textbf{Competitive Gridworld}: 5$\times$5, 5 actions, 149 P0 states.
\textbf{Resource Collection}: 4$\times$4, 4 actions, 18,604 P0 states.
All implemented from scratch with \texttt{MaskedEnv} wrappers.

\section{Experiments}

We structure experiments around five questions: (1)~Does adversarial masking scale with game complexity? (2)~Is the vulnerability algorithm-invariant? (3)~Does it generalise beyond poker? (4)~How efficient is adversarial targeting? (5)~What mechanism drives the attack?

\subsection{Scaling with Game Complexity}
\label{sec:neural}

Our strongest result uses function approximation on both sides: a DQN victim and a neural MLP adversary in Leduc variants whose state spaces range from $\sim$50 to 5,531 victim information states.

\begin{figure}[ht]
\centering
\includegraphics[width=0.72\linewidth]{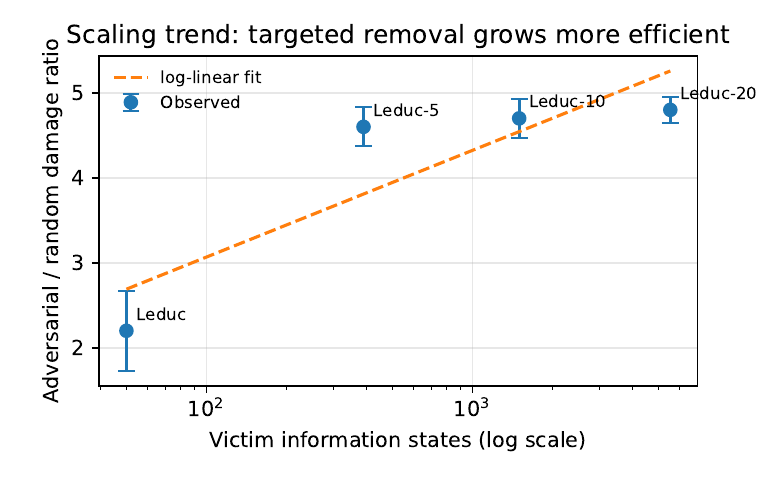}
\caption{Scaling trend. The adversarial/random damage ratio is plotted against victim information-set count on a log scale. Error bars show 95\% CIs from the DQN scale experiments.}
\label{fig:scaling}
\end{figure}

We implement Leduc-N, parameterised by rank count. \textbf{Leduc-5} (10 cards, 389 states), \textbf{Leduc-10} (20 cards, 1,496 states), and \textbf{Leduc-20} (40 cards, \textbf{5,531 states}) test whether the adversary's advantage persists as game complexity grows by $920\times$ from Kuhn. Figure~\ref{fig:scaling} and Table~\ref{tab:scaling} show a consistent empirical scaling trend.

\begin{table}[ht]
\centering
\caption{DQN victim across poker scales (5 seeds, 95\% CIs).}
\label{tab:scaling}
\begin{tabular}{lccccc}
\toprule
Game & $|\mathcal{I}_0|$ & None & Random & Neural Adv & Ratio \\
\midrule
Leduc & $\sim$50 & $-0.10$ & $-1.17$ & $-2.57 \pm 0.47$ & $2.2\times$ \\
Leduc-5 & 389 & $-0.15$ & $-0.60$ & $-2.74 \pm 0.23$ & $4.6\times$ \\
Leduc-10 & 1,496 & $-0.18$ & $-0.68$ & $-3.19 \pm 0.23$ & $4.7\times$ \\
Leduc-20 & \textbf{5,531} & $-0.09$ & $-0.63$ & $\mathbf{-3.00 \pm 0.15}$ & $\mathbf{4.8\times}$ \\
\bottomrule
\end{tabular}
\end{table}

At 5,531 states the neural adversary drives DQN to $-3.00$ per hand---$4.8\times$ worse than random---with the tightest CIs of any scale ($\pm 0.15$). A strict matched-$L_0$ control in Leduc samples exactly the same number of states as the learned mask ($k=64.8\pm4.3$): adversarial masking reaches $-2.32\pm0.36$ while matched random reaches $-1.03\pm0.24$ ($2.24\times$ gap). The advantage is therefore \emph{which} states are targeted, not support size.

\paragraph{Threat-model ablation.}
The main setting assumes environment-level control: the adversary can observe the victim information state and alter legal actions before selection. This is intentionally strong, matching settings where a simulator, platform, or operating layer controls capabilities. To test whether private-card access is essential, we restrict the adversary to public information only (betting history and public card, private rank hidden). In Leduc, public-info adversarial masking still reaches $-1.71 \pm 0.58$, substantially worse than random masking ($-0.98 \pm 0.08$), though weaker than private-info masking ($-2.30 \pm 0.39$). Thus strong private observability is not required: public structural information is sufficient for damaging targeted removal.

\paragraph{Neural NFSP on Leduc-5.}
To confirm the result holds with a stronger victim, we run \emph{Neural NFSP} (128-64 MLP average policy, DQN best-response, reservoir-sampled action buffer) on Leduc-5. Pre-attack: $-0.13 \pm 0.09$; post-adversarial: $\mathbf{-1.89 \pm 0.16}$; post-random: $-0.89 \pm 0.06$. The adversary is $2.3\times$ more damaging than random, confirming that even proper neural game-solving methods collapse under adversarial action removal.

\begin{figure}[ht]
\centering
\includegraphics[width=0.72\linewidth]{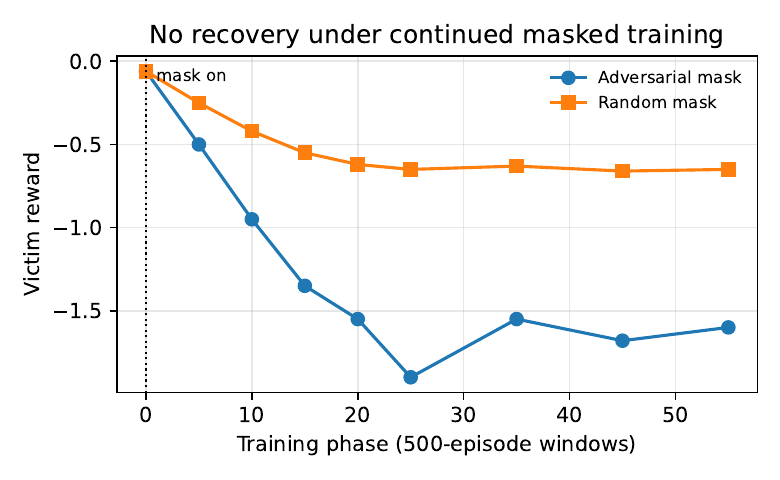}
\caption{Learning under a fixed mask. NFSP in Leduc-5 does not recover under continued masked training; adversarial removal remains substantially worse than random removal.}
\label{fig:learning}
\end{figure}

\paragraph{DQN masking mechanics.}
Masked actions are filtered from the legal action list \emph{before} $\varepsilon$-greedy selection: the DQN computes Q-values for all actions, then restricts argmax and exploration to the masked subset. The victim is unaware of the masking. Total DQN training: 20k pre-training episodes (no mask) + 20 outer $\times$ 500 inner = 10k episodes under the mask. Learning curves (Appendix~\ref{app:curves}) confirm convergence: reward stabilises by episode $\sim$8k under the mask.

\paragraph{OpenSpiel/CFR context.}
Kuhn and Leduc rules match the standard imperfect-information games used in OpenSpiel~\citep{lanctot2019openspiel}; we implement them from scratch only to expose the action-mask hook at every legal-action query and to scale Leduc-N by rank count. CFR and Deep CFR~\citep{zinkevich2007regret,brown2019deep} assume a fixed game tree. Action removal changes the game tree itself, so CFR remains a useful equilibrium reference but is not a direct defense unless retrained on the masked game. Our results should therefore be read as structural robustness tests of learning dynamics under game-tree modification.

\subsection{Tabular Validation Across Algorithms}
\label{sec:tabular}

\begin{table}[ht]
\centering
\caption{Tabular agents in Kuhn and Leduc under masking (5 seeds). Both raw reward ($r$) and normalised performance shown.}
\label{tab:tabular}
\begin{tabular}{lccccc}
\toprule
& \multicolumn{2}{c}{Raw Reward} & \multicolumn{2}{c}{Normalised} \\
\cmidrule(lr){2-3} \cmidrule(lr){4-5}
Setting & None & Advers. & None & Advers. \\
\midrule
Kuhn + QL & $-0.05$ & $-0.98$ & $.487$ & $.267$ \\
Kuhn + PPO & $-0.05$ & $-0.85$ & $.486$ & $.274$ \\
Kuhn + NFSP & $+0.12$ & $-0.45$ & $.530$ & $.388$ \\
Leduc + QL & $+0.05$ & $-3.50$ & $.502$ & $.367$ \\
Leduc + PPO & $-0.08$ & $-3.32$ & $.497$ & $.369$ \\
Leduc + NFSP & $-0.01$ & $-1.58$ & $.500$ & $.440$ \\
\bottomrule
\end{tabular}
\end{table}

All algorithms collapse under adversarial masking---in both Kuhn and Leduc. The vulnerability is \emph{algorithm-invariant}: Q-Learning, PPO, and NFSP all degrade to comparable levels. NFSP on Leduc drops from $-0.01$ to $-1.58$ (raw), confirming that NFSP's average-strategy component cannot compensate when the action space itself is reduced. Combined with the DQN scaling result (Table~\ref{tab:scaling}), we have \textbf{four algorithms across two games and two paradigms (tabular + neural) all exhibiting collapse}.

\subsection{Cross-Domain Validation: Competitive Gridworld}
\label{sec:crossdomain}

To verify the phenomenon is not a poker artifact, we test on a \emph{completely different} domain: a 5$\times$5 competitive gridworld where Player~0 (prey) navigates to a goal while Player~1 (predator) attempts to intercept. Both players have 5 actions (UP/DOWN/LEFT/RIGHT/STAY), the game is turn-based with perfect information, and 149 unique P0 states are observed.

\begin{table}[ht]
\centering
\caption{Cross-domain: Competitive Gridworld (5$\times$5, 5 actions, 149 P0 states, 5 seeds).}
\label{tab:gridworld}
\begin{tabular}{lcc}
\toprule
Masking & P0 Reward & $\Delta$ \\
\midrule
None & $+0.70 \pm 0.01$ & --- \\
Random ($p{=}0.3$) & $+0.04 \pm 0.18$ & $-0.66$ \\
Fixed (UP) & $+0.71 \pm 0.10$ & $+0.01$ \\
Adversarial & $\mathbf{-0.58 \pm 0.03}$ & $\mathbf{-1.27}$ \\
\bottomrule
\end{tabular}
\end{table}

The adversary causes $1.9\times$ more damage than random masking in a domain with no cards, no imperfect information, and a spatial rather than strategic structure. Fixed removal (always remove UP) has negligible effect because the prey reroutes; the adversary \emph{state-dependently} removes the action that matters most at each position.

\paragraph{Resource Collection (4$\times$4).}
A second non-poker domain: two agents compete to collect 4 resources on a grid, with reward = resources\_collected\_difference. 4 actions, 18,604 unique P0 states observed. Adversarial masking degrades P0 by $\Delta = -0.13$ vs.\ random $\Delta = -0.10$ ($1.4\times$). Fixed removal (UP) \emph{helps} P0 ($\Delta = +0.32$), showing that only state-dependent targeting produces consistent harm. The smaller gap ($1.4\times$ vs.\ $1.9\times$) reflects the symmetric, coordination-like structure where many states have similar strategic value---there is less exploitable structure to target.

Two non-poker domains with different structures (goal-seeking vs.\ resource competition, perfect information, spatial) both show adversarial $>$ random. The phenomenon is not poker-specific.

\subsection{Attack Efficiency and CAC$_w$ Correlation}
\label{sec:efficiency}

\paragraph{Budget sweep.}
Appendix~\ref{app:ablations} shows victim reward as a function of adversary budget $k$ (number of info states masked). Targeted removal degrades the victim as support grows; random masking requires broad coverage to approach the same damage.

\paragraph{CAC$_w$ correlation.}
To verify that the adversary operates through CAC$_w$ minimisation, we compute reach-weighted CAC$_w$ at each budget level and correlate with victim reward. Across all budget levels and both adversarial and random conditions (10 seeds each), the Pearson correlation between CAC$_w$ and reward is $r = 0.77$ ($p < 0.001$); for adversarial-only conditions, $r = 0.80$. The full budget table is in Appendix~\ref{app:ablations}.

\paragraph{CACv-greedy oracle.}
We also test a direct oracle heuristic that selects the top-$k$ Kuhn states by estimated $\rho(h)\delta(h)$ and removes the victim's estimated best action. At $k=3$, CACv-greedy reaches $-0.29 \pm 0.09$, stronger than random ($-0.19 \pm 0.06$) and the learned adversary under the same short training budget ($-0.22 \pm 0.02$). This supports CACv as a mechanistic proxy: when given the value-gap statistic directly, greedy targeting produces the expected harm.

\paragraph{Estimating CACv.}
We estimate $\rho(h)$ from on-policy visitation frequencies during evaluation rollouts and $\delta(h)$ from the victim's learned Q-table or network Q-values at the same checkpoint. Thus CACv is a diagnostic proxy, not an oracle ground truth: Q-estimation error can perturb rankings. Despite this estimation noise, CACv rankings are stable across seeds and align with oracle performance: the top CACv states in Kuhn repeatedly include the same high-impact information sets (\texttt{0pb}, \texttt{1}, \texttt{2pb}), and the CACv-greedy ablation remains stronger than random.

\subsection{Action Removal vs.\ Perturbation (RARL)}

\begin{table}[ht]
\centering
\caption{Action removal vs.\ perturbation: both with \emph{learned} adversary (Kuhn, 10 seeds, 95\% CIs). Both adversaries use the same REINFORCE training (20 outer $\times$ 500 inner). Removal is $4\times$ more damaging even when the perturbation adversary is given equal training budget.}
\label{tab:rarl}
\begin{tabular}{lcc}
\toprule
Attack Type & Raw Reward & Normalised \\
\midrule
None & $-0.02 \pm 0.07$ & $.494$ \\
Learned perturbation & $-0.24 \pm 0.09$ & $.440$ \\
Learned removal & $\mathbf{-1.01 \pm 0.09}$ & $\mathbf{.248}$ \\
\bottomrule
\end{tabular}
\end{table}

Both adversaries use identical training (REINFORCE, same budget, same iterations), ensuring a fair comparison. The learned perturbation adversary ($-0.24$) is stronger than fixed-$p$ RARL ($-0.09$, not shown), but learned removal ($-1.01$) remains $4\times$ more damaging. The mechanism is qualitatively different: perturbation adds noise to execution but preserves the agent's \emph{ability to choose}; removal eliminates decision points entirely, reducing CAC$_w$~\citep{pinto2017robust,tessler2019action}.

\paragraph{Perturbation protocol.}
In the learned perturbation baseline, the adversary observes the same information state and replaces the victim's chosen action with another legal action (or no-op) before execution. The victim is unaware of the replacement and receives the same training/evaluation schedule as in the removal experiment. This matches training budget and observability while preserving the legal action set.

\subsection{Additional Ablations}
The attack is amplified by self-play co-adaptation (self-play: $-0.975$, fixed opponent: $-0.841$), transfers across random seeds (transferred: $-1.032$ vs.\ per-agent retrained: $-0.872$), transfers from Q-learning to NFSP ($\Delta=-0.49$), and is not an artifact of parameter sharing: separate tabular Q-tables match shared-agent results, while separate DQN networks in Leduc still collapse to $-1.53 \pm 0.48$. Evaluation-only masking already degrades a normally trained Leduc victim ($-0.58$), while continued masked training and retraining from scratch on the fixed final learned mask converge to larger losses ($-2.65$ and $-2.71$), showing both immediate structural harm and long-run adaptation failure. Simple action-dropout helps modestly ($-1.82$ vs.\ $-2.45$), but a persistent random mask-ensemble defense does not help ($-2.64$). This negative result is informative: robustness to arbitrary stochastic unavailability is not enough; effective defenses likely need targeted protection or redundancy at high-CACv states. Full ablation tables are in Appendix~\ref{app:ablations}.

\section{Discussion}

\paragraph{Scaling trend.}
The scaling trajectory (Table~\ref{tab:scaling}) shows a consistent empirical trend. Excluding Kuhn (2-action ceiling effect), log-linear regression of adversary damage ratio on $\log_{10}(|\mathcal{I}_0|)$ yields $R^2 = 0.62$ (Pearson $r = 0.79$, $n = 4$ game sizes), with slope $1.56$. We report this as an \emph{empirical trend}, not a scaling law---four data points establish direction, not a precise functional form. The absolute damage gap (adversarial minus random reward) grows from $-1.40$ at 50 states to $-2.51$ at 1,496 states, and the adversary's variance \emph{decreases} with scale (CI $\pm 0.47 \to \pm 0.15$), suggesting more robust convergence in larger games where there is more targetable strategic heterogeneity. Neural NFSP ($2.3\times$ on Leduc-5) confirms this holds for properly neural game-solving methods.

\paragraph{CACw: necessary but not sufficient.}
The CAC$_w$--reward correlation ($r = 0.80$) is consistent with the hypothesis that reducing strategic flexibility drives performance degradation. However, CACw is a \emph{necessary} condition for collapse, not a complete characterization: at intermediate budgets, the adversary sometimes achieves comparable or greater damage with higher CACw than random masking (Table~\ref{tab:budget}, $k{=}3$: adversarial CACw $= 0.68$ vs.\ random $0.37$, yet adversarial reward $= -0.25$ vs.\ random $-0.21$). This occurs because CACw aggregates over all reachable states; the adversary targets the \emph{strategically pivotal} subset where action removal most disrupts equilibrium play, whereas random removal may eliminate more states but at less important ones. We test a refined measure, \emph{counterfactual-value-weighted capacity} $\text{CAC}_v = \sum_h \rho(h) \cdot |Q(h,a_0) - Q(h,a_1)| \cdot \mathbf{1}[|\mathcal{M}(h)| > 1]$, which weights each state by its Q-value gap (strategic importance). CAC$_v$ achieves Pearson $r = 0.81$ vs.\ CAC$_w$'s $r = 0.77$, confirming that incorporating value sensitivity improves the structural explanation. The remaining unexplained variance likely reflects higher-order interactions (e.g., how masking one state changes the value landscape at downstream states).

\paragraph{Implications for deployment.}
Real-world multi-agent systems where an adversary can disable specific capabilities (API endpoints, actuators, communication channels) are vulnerable to targeted collapse. Defences should maintain strategic flexibility at high-reach, high-CACv decision points. Uniform robustness methods such as random action dropout or random mask ensembles are insufficient in our experiments; the defensive object is not raw action count, but redundancy at the strategically pivotal states where removal produces large value gaps. Designing effective defenses likely requires identifying and preserving high-CACv states, rather than optimizing for uniform robustness to arbitrary action unavailability.

\section{Limitations}

Our largest game (Leduc-20) has $\sim$5,500 information states---nearly three orders of magnitude larger than Kuhn. The increasing adversary advantage with scale (Table~\ref{tab:scaling}) suggests the mechanism strengthens as games grow, but verification at full poker scale ($10^5$+ states) remains future work.

\paragraph{Discrete scope.}
Action removal is inherently a \emph{discrete} attack: it eliminates specific actions from a finite set. Our results---across poker, gridworld, and resource collection---all use small discrete action spaces ($|A| \leq 5$). In continuous-action domains (e.g., MuJoCo), the analogous attack would be \emph{region exclusion} (disabling subsets of the action manifold), which requires different formalisation and may exhibit qualitatively different dynamics. We do not claim generality beyond discrete action spaces.

\paragraph{Theory.}
The damage bound (Proposition~2) is local and additive, assuming independent masking effects across states. Cross-state interactions (masking one state shifts Q-values at others) are not captured. The scaling regression ($R^2 = 0.62$, 4 game sizes) establishes a trend but not a precise functional relationship.

\section{Conclusion}

We show that self-play RL agents---from tabular Q-Learning to neural NFSP and DQN, from 6-state Kuhn to 5,531-state Leduc-20, across poker and non-poker domains---are brittle to targeted action-space attacks. Within the Leduc family, the adversary's advantage over random masking increases with game complexity ($2.2\times$ at 50 states to $4.8\times$ at 5,531), and the vulnerability persists across five algorithms and seven environments. The mechanism operates through selective reduction of strategically important decision capacity (CAC$_w$/$_v$), victims do not recover under extended training, and the attack transfers across agents. These findings argue for robustness designs that preserve strategic flexibility at high-reach decision points rather than raw action count.

\section*{Reproducibility Statement}

All environments, agents, adversaries, and experiment scripts are implemented in the accompanying repository. Each reported result is generated by a standalone script under \texttt{experiments/}; random seeds are fixed in the scripts, and hyperparameters are listed in Appendix~\ref{app:hyperparams}. Figures are generated by \texttt{experiments/generate\_neurips\_figures.py} from completed experiment outputs. The largest run, Leduc-20, uses five seeds, 30k victim pre-training episodes, and 25 adversary outer iterations with 500 inner episodes each.

\bibliographystyle{plainnat}
\bibliography{references}

\appendix

\section{Tabular PPO Details}
\label{app:ppo}
Our tabular PPO maintains a softmax policy over a preference table $\theta(s,a)$. Action probabilities: $\pi(a|s) = \text{softmax}(\theta(s,\cdot))$. Updates use the clipped surrogate objective $L = \min(r_t A_t, \text{clip}(r_t, 1{-}\epsilon, 1{+}\epsilon) A_t)$ with $r_t = \pi_{\text{new}}/\pi_{\text{old}}$, advantage $A_t = R - b$ (running baseline), clip parameter $\epsilon = 0.2$, entropy bonus coefficient $0.01$, learning rate $0.01$.

\section{Experimental Protocols}
\label{app:protocols}

\paragraph{Self-play protocol.}
Both victim and opponent are the same agent; the mask is applied to Player~0's actions at every decision during both training and evaluation. In the fixed-opponent regime, a snapshot of the Q-table is frozen before masking; this static copy serves as Player~1 while only Player~0 continues learning.

\paragraph{Transfer experiment.}
The adversary is trained against one victim (seed~42) for 20 outer $\times$ 500 inner episodes. This mask is applied without modification to fresh victims (seeds 123--2048), each pre-trained 10k episodes then trained 10k under the transferred mask.

\paragraph{Budget enforcement.}
Budget $k$ is enforced via top-$k$ projection: the adversary trains over all states but at execution only the top-$k$ by confidence (max softmax prob.\ minus uniform baseline) have masks applied. This approximates a hard $L_0$ constraint on mask support.

\paragraph{Action-removal granularity.}
Unless otherwise stated, a masked state removes exactly one legal action; neural adversaries output one removal choice or a no-op. Masks never remove all actions: if removing the selected action would leave an empty set, the original legal set is retained. In two-action games, removing one action naturally forces a singleton; in three- or five-action games, most masks leave multiple alternatives.

\section{Hyperparameters}
\label{app:hyperparams}

\begin{table}[ht]
\centering
\caption{Q-Learning hyperparameters.}
\begin{tabular}{lcc}
\toprule
Parameter & Kuhn & Leduc \\
\midrule
Learning rate $\alpha$ & 0.1 & 0.1 \\
Exploration $\varepsilon$ & 0.15 & 0.15 \\
Discount $\gamma$ & 1.0 & 1.0 \\
Pre-training episodes & 10\,000 & 15\,000 \\
Post-mask episodes & 10\,000 & 15\,000 \\
\bottomrule
\end{tabular}
\end{table}

\begin{table}[ht]
\centering
\caption{DQN hyperparameters for Leduc-scale experiments.}
\begin{tabular}{lc}
\toprule
Parameter & Value \\
\midrule
Network (Leduc) & 37 $\to$ 64 $\to$ 64 $\to$ 3 \\
Network (Leduc-10/20) & $(2N{+}31) \to 128 \to 64 \to 3$ \\
Learning rate & $10^{-3}$ (Adam) \\
$\varepsilon$ schedule & $1.0 \to 0.05$ (decay $0.9999$) \\
Replay buffer & 20\,000--50\,000 \\
Batch size & 64--128 \\
Target update & every 500--1000 episodes \\
\bottomrule
\end{tabular}
\end{table}

\begin{table}[ht]
\centering
\caption{Neural NFSP hyperparameters (Leduc-5).}
\begin{tabular}{lc}
\toprule
Parameter & Value \\
\midrule
Best-response network & 49 $\to$ 128 $\to$ 64 $\to$ 3 \\
Average-policy network & 49 $\to$ 128 $\to$ 64 $\to$ 3 \\
BR learning rate & $10^{-3}$ \\
Average-policy learning rate & $10^{-3}$ \\
Reservoir buffer & 50\,000 action samples \\
Replay buffer & 20\,000 transitions \\
Anticipatory parameter $\eta$ & 0.1 \\
\bottomrule
\end{tabular}
\end{table}

\begin{table}[ht]
\centering
\caption{PPO hyperparameters.}
\begin{tabular}{lc}
\toprule
Parameter & Value \\
\midrule
Learning rate & 0.01 \\
Clip $\epsilon$ & 0.2 \\
Entropy bonus & 0.01 \\
Baseline & Running mean of returns \\
\bottomrule
\end{tabular}
\end{table}

\begin{table}[ht]
\centering
\caption{Adversary hyperparameters (tabular / neural).}
\begin{tabular}{lcc}
\toprule
Parameter & Tabular & Neural \\
\midrule
Architecture & Preference table & 37 $\to$ 32 $\to$ 4 (softmax) \\
Learning rate & 0.01 & $10^{-3}$ (Adam) \\
Outer iterations & 20--30 & 20--25 \\
Inner episodes & 500 & 500 \\
Update rule & REINFORCE & REINFORCE + baseline \\
\bottomrule
\end{tabular}
\end{table}

\section{Normalisation Details}
\label{app:normalisation}

Normalised performance uses $\text{norm}(r) = (r - r_{\min}) / (r_{\max} - r_{\min})$:

\begin{table}[ht]
\centering
\caption{Normalisation bounds (max single-hand loss/gain).}
\begin{tabular}{lcc}
\toprule
Game & $r_{\min}$ & $r_{\max}$ \\
\midrule
Kuhn Poker & $-2.0$ & $+2.0$ \\
Leduc Poker & $-13.0$ & $+13.0$ \\
Leduc-5 Poker & $-17.0$ & $+17.0$ \\
Leduc-10 Poker & $-21.0$ & $+21.0$ \\
Leduc-20 Poker & $-41.0$ & $+41.0$ \\
Comp.\ Gridworld & $-1.0$ & $+1.0$ \\
Resource Collection & $-4.0$ & $+4.0$ \\
\bottomrule
\end{tabular}
\end{table}

Normalisation aids cross-game comparison but obscures absolute exploitability: a normalised $.37$ in Leduc ($\Delta r = -3.5$) represents a larger absolute loss than $.27$ in Kuhn ($\Delta r = -0.9$).

\section{Additional Ablations}
\label{app:ablations}

\begin{table}[ht]
\centering
\caption{Mask timing controls in Leduc Q-learning.}
\begin{tabular}{lc}
\toprule
Training/evaluation protocol & Victim reward \\
\midrule
No mask & $+0.05 \pm 0.03$ \\
Evaluation-only mask & $-0.58 \pm 0.18$ \\
Continued training under mask & $-2.65 \pm 0.30$ \\
Fixed-mask retraining from scratch & $-2.71 \pm 0.26$ \\
\bottomrule
\end{tabular}
\end{table}

\begin{table}[ht]
\centering
\caption{Threat-model ablation in Leduc Q-learning: private vs.\ public information adversary.}
\begin{tabular}{lc}
\toprule
Adversary information & Victim reward \\
\midrule
None & $+0.05 \pm 0.03$ \\
Random mask & $-0.98 \pm 0.08$ \\
Public info only & $-1.71 \pm 0.58$ \\
Private info & $-2.30 \pm 0.39$ \\
\bottomrule
\end{tabular}
\end{table}

\begin{table}[ht]
\centering
\caption{CACv-greedy oracle at budget $k=3$ in Kuhn.}
\begin{tabular}{lc}
\toprule
Masking strategy & Victim reward \\
\midrule
None & $-0.02 \pm 0.08$ \\
Random & $-0.19 \pm 0.06$ \\
Learned adversary & $-0.22 \pm 0.02$ \\
CACv-greedy oracle & $\mathbf{-0.29 \pm 0.09}$ \\
\bottomrule
\end{tabular}
\end{table}

\begin{table}[ht]
\centering
\caption{Strict matched-$L_0$ control in Leduc Q-learning. Matched random samples exactly the same number of information states as the learned adversarial mask.}
\begin{tabular}{lcc}
\toprule
Masking strategy & Effective $k$ & Victim reward \\
\midrule
Adversarial & $64.8 \pm 4.3$ & $-2.32 \pm 0.36$ \\
Matched random & $64.8 \pm 4.3$ & $-1.03 \pm 0.24$ \\
\bottomrule
\end{tabular}
\end{table}

\begin{table}[ht]
\centering
\caption{Precise effective $L_0$ support diagnostics in Leduc Q-learning.}
\begin{tabular}{lccc}
\toprule
Mask & Masked states & Seen states & Decision mask rate \\
\midrule
Random & 69.0 & 74.0 & 0.761 \\
Fixed raise & 61.4 & 74.0 & 0.941 \\
Private adversary & 47.6 & 70.8 & 0.656 \\
Public adversary & 55.0 & 70.6 & 0.794 \\
\bottomrule
\end{tabular}
\end{table}

\begin{table}[ht]
\centering
\caption{Neural and defense ablations.}
\begin{tabular}{lcc}
\toprule
Ablation & Setting & Victim reward \\
\midrule
Separate DQN networks & Leduc + adversary & $-1.53 \pm 0.48$ \\
Defense & Standard training & $-2.45 \pm 0.60$ \\
Defense & Stochastic dropout & $-1.82 \pm 0.31$ \\
Defense & Random mask ensemble & $-2.64 \pm 0.54$ \\
\bottomrule
\end{tabular}
\end{table}

\begin{table}[ht]
\centering
\caption{Representative compute and sample costs. Wall-clock times are CPU runs on a desktop workstation.}
\begin{tabular}{lccc}
\toprule
Experiment & Seeds & Episodes per seed & Wall-clock \\
\midrule
Leduc DQN scale & 5 & 20k pre + 10k attack & $\sim$20 min \\
Leduc-20 DQN scale & 5 & 30k pre + 12.5k attack & $\sim$17 min \\
Reviewer ablations & 5--10 & 6k--24k & $\sim$5 min \\
Figure generation & n/a & none & $<$1 min \\
\bottomrule
\end{tabular}
\end{table}

\begin{table}[ht]
\centering
\caption{Budget sweep: victim reward (raw) under adversarial and random masking (Kuhn, 10 seeds, 95\% CIs).}
\label{tab:budget}
\begin{tabular}{lcc}
\toprule
$k$ & Adversarial & Random \\
\midrule
0 & $-0.02 \pm 0.07$ & $-0.02 \pm 0.07$ \\
1 & $-0.21 \pm 0.07$ & $-0.02 \pm 0.07$ \\
2 & $-0.24 \pm 0.07$ & $-0.10 \pm 0.06$ \\
3 & $-0.25 \pm 0.04$ & $-0.21 \pm 0.03$ \\
4 & $-0.28 \pm 0.04$ & $-0.41 \pm 0.04$ \\
5 & $-0.72 \pm 0.21$ & $-0.66 \pm 0.01$ \\
6 & $-0.98 \pm 0.07$ & $-0.92 \pm 0.00$ \\
\bottomrule
\end{tabular}
\end{table}

\begin{table}[ht]
\centering
\caption{Secondary ablations in Kuhn Q-learning.}
\begin{tabular}{lcc}
\toprule
Ablation & Setting & Victim reward \\
\midrule
Self-play & Co-adaptive & $-0.975 \pm 0.09$ \\
Self-play & Fixed opponent & $-0.841 \pm 0.36$ \\
Transfer & Trained on seed 42 & $-1.032 \pm 0.009$ \\
Transfer & Retrained per-agent & $-0.872 \pm 0.134$ \\
Parameter sharing & Shared Q-table & $-0.948 \pm 0.07$ \\
Parameter sharing & Separate Q-tables & $-0.948 \pm 0.07$ \\
\bottomrule
\end{tabular}
\end{table}

\begin{table}[ht]
\centering
\caption{Learned Kuhn mask at full budget.}
\begin{tabular}{llcc}
\toprule
State & Meaning & Removed & Confidence \\
\midrule
\texttt{0} & J, root & BET & $0.99$ \\
\texttt{0pb} & J, facing bet & PASS & $0.99$ \\
\texttt{1} & Q, root & PASS & $0.95$ \\
\texttt{2} & K, root & BET & $0.98$ \\
\texttt{2pb} & K, facing bet & BET & $0.98$ \\
\bottomrule
\end{tabular}
\end{table}

\section{Learning Curves Under Attack}
\label{app:curves}

To confirm that reported results are \emph{asymptotic} (converged) rather than transient, we report reward in 500-episode windows during post-attack training.

\paragraph{Kuhn + QL (seed 42).}
Pre-attack final window: $-0.16$. Post-attack trajectory: reward drops to $\sim\!-0.3$ within 2k episodes and oscillates around $-0.2$ to $-0.4$ for the remaining 13k episodes. No recovery trend.

\paragraph{Leduc + QL (seed 42).}
Pre-attack: $+0.13$. Post-attack: gradual degradation from $0$ to $-1.0$ over 10k episodes, with oscillations. Final window: $-0.40$. The attack takes $\sim$5k episodes to fully manifest as the adversary's policy-gradient updates progressively sharpen.

\paragraph{Leduc + NFSP (seed 42).}
Pre-attack: $-0.30$. Post-attack: initial spike to $+1.0$ (NFSP's average strategy briefly exploits the adversary's early exploration), followed by monotonic decline to $-0.51$ by episode 15k. The adversary overcomes NFSP's initial resistance.

\paragraph{Leduc-5 + NFSP (5 seeds).}
Pre-attack: $-0.06$. Under adversarial masking with 25 outer iterations: progressive degradation from $-0.5$ (outer 5) to $-1.9$ (outer 25). Continued training for 15k additional episodes under the converged mask: reward oscillates between $-1.2$ and $-2.1$ with no recovery trend. The adversary overcomes NFSP's average-strategy buffer at this scale.

These curves confirm that (a)~the attack converges and is not a transient phenomenon, (b)~victims do not recover even with $3\times$ extended training under the mask, and (c)~the no-recovery property holds across algorithms and game sizes.

\end{document}